%% file: main.tex
\documentclass{article}
\usepackage[preprint]{colm2026_conference}
\setcitestyle{numbers,square,comma}

\usepackage{times}
\usepackage{latexsym}
\usepackage[T1]{fontenc}
\usepackage[utf8]{inputenc}
\usepackage{url}
\usepackage{hyperref}
\usepackage{graphicx}
\usepackage{booktabs}
\usepackage{amsmath}
\usepackage{amssymb}
\usepackage{amsthm}
\usepackage{xcolor}
\usepackage{microtype}
\usepackage{multirow}
\usepackage{makecell}

\definecolor{darkgreen}{rgb}{0.0,0.45,0.0}
\hypersetup{colorlinks=true, linkcolor=blue, citecolor=blue, urlcolor=blue}

\newtheorem{proposition}{Proposition}

\newtheorem{corollary}{Corollary}
\theoremstyle{remark}

\title{Bounded Personas Match Retrieval on Classification but Not Regression for a Frozen Agent}

\author{JaeHa Yoon$^{1}$, Minjun Park$^{1}$, Seoyeon Kim$^{2}$, Jiwoo Lee$^{3}$, Hyunwoo Choi$^{1}$, Dohyun Kang$^{2}$ \\
$^{1}$Seoul National University \quad $^{2}$KAIST \quad $^{3}$Korea University, Seoul, Republic of Korea \\
\texttt{jaeha.yoon@snu.ac.kr}}

\newcommand{\persink}{\textsc{PersonaLink}}
\newcommand{\pstate}{\textsc{PersonaState}}
\newcommand{\nopers}{\textsc{NoPers}}
\newcommand{\rag}{\textsc{RAG}}
\newcommand{\pag}{\textsc{PAG}}

\begin{document}

\maketitle
\lhead{}  % clear the "Preprint. Under review." running header set inside \@maketitle

\begin{abstract}
A personalized language agent must convert a user's interaction history into behavior
on each new request at inference time. Two strategies dominate. Retrieval pulls a few
of the user's most relevant past items into the prompt, which is accurate but pays a
per-query selection and context cost that grows with the history. Distillation instead
compresses the history once into a compact natural-language persona, which is bounded,
query-independent, and interpretable, but is widely assumed to sacrifice accuracy.
Whether, and on which tasks, a distilled persona can match retrieval has not been
characterized cleanly. We introduce \persink{}, a training-free method that distills a
user's history into a bounded three-field persona and recursively refines it: each pass
self-evaluates the frozen agent on a held-out slice of the user's own labeled history,
rewrites the persona from its errors, and keeps the result only when it does not regress
on that slice. Because every comparison shares one frozen 7B backbone and differs only in
what is placed in context, the design isolates the effect of representation from that of
the model. The result is a clear task-type asymmetry. On 200 users of LaMP-2 (15-way news
categorization), \persink{} reaches $0.745$--$0.755$ accuracy, statistically
indistinguishable from BM25 retrieval ($0.760$--$0.765$). On 200 users of LaMP-3 (1--5
product-rating regression), retrieval is decisively stronger ($0.285$ vs.\ $0.455$ MAE,
$p{<}10^{-4}$) and continues to improve as $k$ grows, reaching $0.790$ accuracy and
$0.250$ MAE at $k{=}20$ with no plateau. Recursion adds little beyond the first pass: the
persona representation contracts toward a per-user fixed point (per-pass displacement
$0.90{\to}0.21{\to}0.02$, modulus $L{=}0.144$, $R^2{=}0.979$), so refinement saturates.
Bounded distillation is therefore a sufficient substitute for retrieval precisely when
the task is to select a label, and not when it requires a calibrated numeric scale.
\end{abstract}

% ===== TEASER: placed AFTER title+abstract per requested order; [h!] keeps it on page 1 =====
\begin{figure}[h!]
\centering
\includegraphics[width=\textwidth]{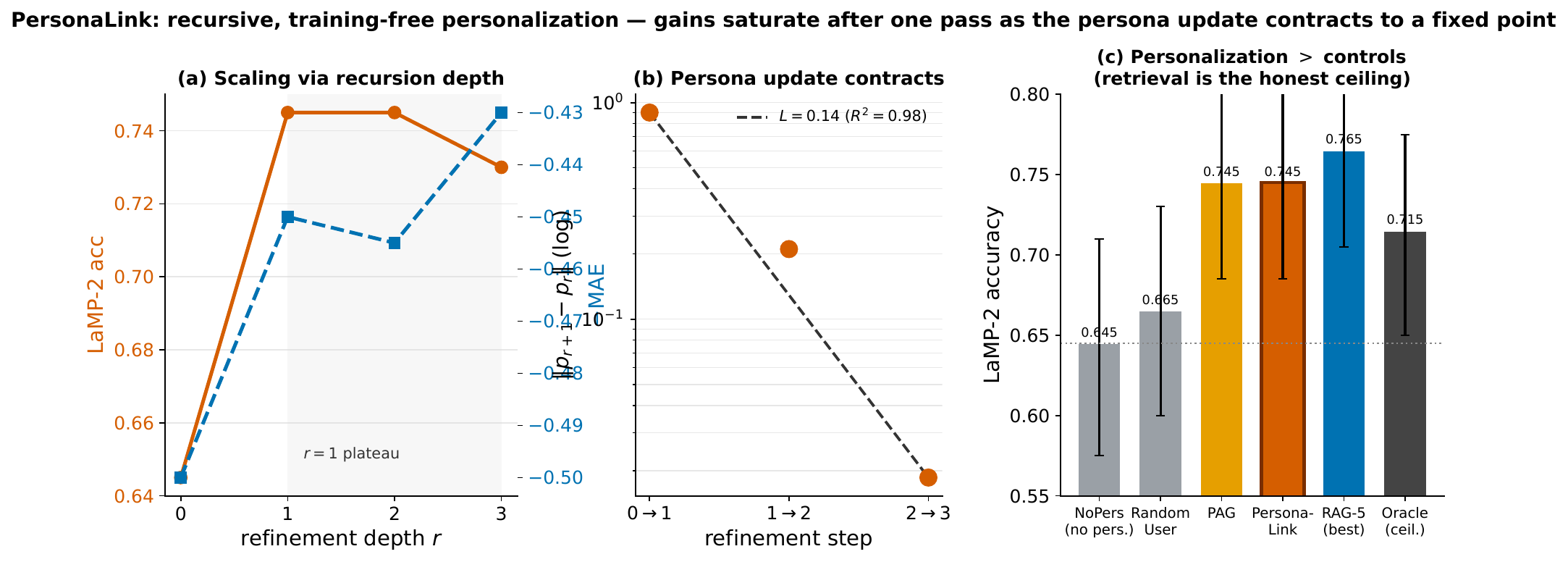}
\caption{\textbf{\persink{} distills a user's history into a bounded, query-independent
persona that is built once and contracts to a fixed point after a single pass.} It
matches retrieval on LaMP-2 classification but not on LaMP-3 regression, and retrieval
(\rag) remains the honest ceiling among non-oracle methods.}
\label{fig:teaser}
\end{figure}

% ============================ 1 INTRODUCTION ============================
\section{Introduction}
\label{sec:intro}

A personalized language agent is, operationally, a function from a user's history to
behavior on a new request. The same news article should be filed under ``politics'' for
one reader and ``style \& beauty'' for another, and the same product review should be
scored differently by a generous and a harsh rater. The agent never sees the user's
label for the new item, so all of its personalization signal must come from that user's
\emph{past} labeled items. How best to convert that history into a single inference is
one of the central design choices for personalized LLM systems.

Two answers have crystallized. The first is to \emph{retrieve}: at query time, pull
the most relevant few of the user's past items into the prompt as demonstrations
\citep{salemi2024lamp,salemi2024ropg}. Retrieval is strong and simple, but it pays a
per-query retrieval and context cost that grows with the number of items used, and it
exposes raw user data on every call. The second is to \emph{distill}: compress the
history once into a compact natural-language profile that is prepended to every
subsequent query \citep{richardson2023pag,zhang2025personaagent}. Distillation is
attractive for the opposite reasons---the persona is built once, is bounded in size,
is the same for every query, and is human-readable---but it is widely assumed to leave
accuracy on the table relative to retrieval. Whether, and \emph{when}, a distilled
persona can actually \emph{match} retrieval has not been cleanly characterized.

We address this question directly. We introduce \persink{} (Figure~\ref{fig:teaser}), a
training-free method that distills a user's history into a \emph{bounded} three-field
persona and then \emph{recursively refines} it: each pass renders the current persona,
runs the frozen agent on a held-out slice of the user's own labeled history, and rewrites
the persona from its own mistakes, keeping the new persona only if it does no worse on the
held-out items. The refinement depth $r$ is an explicit, controllable axis, mirroring the
recursive-scaling perspective~\citep{yang2026recursivemas}, and the keep-better gate makes
the loop monotone-safe by construction. Because every method shares one frozen backbone and
differs only in the text placed in context, the comparison between distillation and
retrieval is held strictly apples-to-apples.

Our experiments reveal a sharp task-type asymmetry. On classification, \persink{} is
competitive with retrieval: on LaMP-2 it reaches $0.745$--$0.755$ accuracy, statistically
indistinguishable from BM25 \rag{} at $k{=}3,5$ ($0.760$/$0.765$). On regression it is
dominated: on LaMP-3 it attains $0.455$ MAE against \rag{}-5's $0.285$ ($p{<}10^{-4}$), and
retrieval keeps improving as items are added ($k{=}10{:}\,0.290$, $k{=}20{:}\,0.250$ MAE)
with no sign of a plateau. Retrieval is therefore the stronger method overall, and we report
it as the best non-oracle method in every results table. The contribution of \persink{} is
not a new state of the art but a characterization: a bounded, query-independent persona
suffices precisely when the task is to select a label, and not when it is to estimate a
fine-grained quantity.

A second finding concerns the recursion itself. Personalization is a genuine effect:
\persink{} beats both a no-persona control and a random-user control that holds the token
budget fixed, so the gains are not an artifact of a longer prompt. The recursion, however,
saturates. The first distillation pass ($r{=}1$) captures essentially all of the benefit,
and $r{=}2$ and $r{=}3$ are not significantly better ($\Delta\mathrm{Acc}{=}0.000$,
$p{=}0.77$ on LaMP-2). We trace this plateau to a measurable property of the refinement
map: the persona representation contracts toward a per-user fixed point, with per-pass
embedding displacement shrinking $0.90{\to}0.21{\to}0.02$ and a geometric-decay fit of
modulus $L{=}0.144$ ($R^2{=}0.979$). We do not claim an unconditional contraction theorem;
we instead state falsifiable assumptions, measure the modulus, and pair the measurement
with a monotone-safety proposition that holds by construction.

This paper makes the following contributions.
\begin{itemize}\itemsep2pt
\item We introduce \persink{}, a bounded three-field persona equipped with a training-free,
monotone-safe recursive refinement loop that distills a user's history into a fixed-size,
query-independent, interpretable representation.
\item We characterize the distill-versus-retrieve trade-off on two LaMP tasks under one
frozen backbone with deterministic metrics and paired significance testing, establishing a
task-type asymmetry: distillation ties retrieval on classification, loses on regression, and
does not scale with the history, whereas retrieval keeps improving as $k$ grows.
\item We give an empirical fixed-point account of why recursive persona refinement saturates
after a single pass, measuring a geometric contraction of the persona representation
($L{=}0.144$, $R^2{=}0.979$).
\end{itemize}
We position \persink{} not as a competitor that beats retrieval but as a study of the regime
in which a bounded distilled persona is sufficient.

% ============================ 2 PRELIMINARY ============================
\section{Preliminary: the personalized prediction setting}
\label{sec:prelim}

We adopt the personalized-LLM setting of LaMP \citep{salemi2024lamp}. A user $u$ is
associated with a \emph{profile} $H_u=\{(x_i,y_i)\}_{i=1}^{m_u}$ of past items, each a
(input, label) pair with a \emph{known} label: for LaMP-2 a past news article and its
human category; for LaMP-3 a past review and its $1$--$5$ rating. At test time the agent
receives a new input $x^\star$ \emph{without} its label and must predict $\hat y$. All
personalization signal therefore lives in $H_u$, and a method is a map that uses $H_u$
(and a frozen LLM $f$) to answer $x^\star$.

We organize methods along the axis that this paper studies. \textbf{No-personalization}
ignores $H_u$ and answers $f(x^\star)$; it is the natural floor. \textbf{Retrieval} (\rag)
selects the $k$ most relevant past items by BM25 \citep{robertson2009bm25} and injects
them as demonstrations, $f(x^\star \mid \mathrm{top\text{-}}k(H_u,x^\star))$; the
injected context depends on $x^\star$ and grows with $k$. \textbf{Distillation} compresses
$H_u$ once into a persona $p_u$ and answers $f(x^\star\mid p_u)$; the injected context is
fixed per user and independent of $x^\star$. A \emph{random-user} control injects another
user's profile, holding the token budget fixed so that any gain over it is attributable
to the \emph{right} user's history rather than to extra tokens. An \emph{oracle} that
places the full true profile in context gives a ceiling. \persink{} is a distillation
method whose persona is bounded and built by recursion; the question of this paper is how
close such a persona can come to retrieval, and on which task types.

We evaluate on the two LaMP tasks that exercise both metric regimes. \textbf{LaMP-2}
(personalized news categorization) is a $15$-way classification of an article into
categories such as \{politics, sports, style \& beauty, science \& technology, $\dots$\};
we score Accuracy and macro-F1. \textbf{LaMP-3} (personalized product rating) is an
ordinal $1$--$5$ regression; we score MAE and RMSE (lower is better), which exposes
fine-grained quantitative personalization. All metrics are deterministic
(scikit-learn); there is \emph{no} LLM judge anywhere in the scoring path.

% ============================ 3 PERSONALINK ============================
\section{\persink{}: a bounded, query-independent persona}
\label{sec:method}

\begin{figure}[t]
\centering
\includegraphics[width=\textwidth]{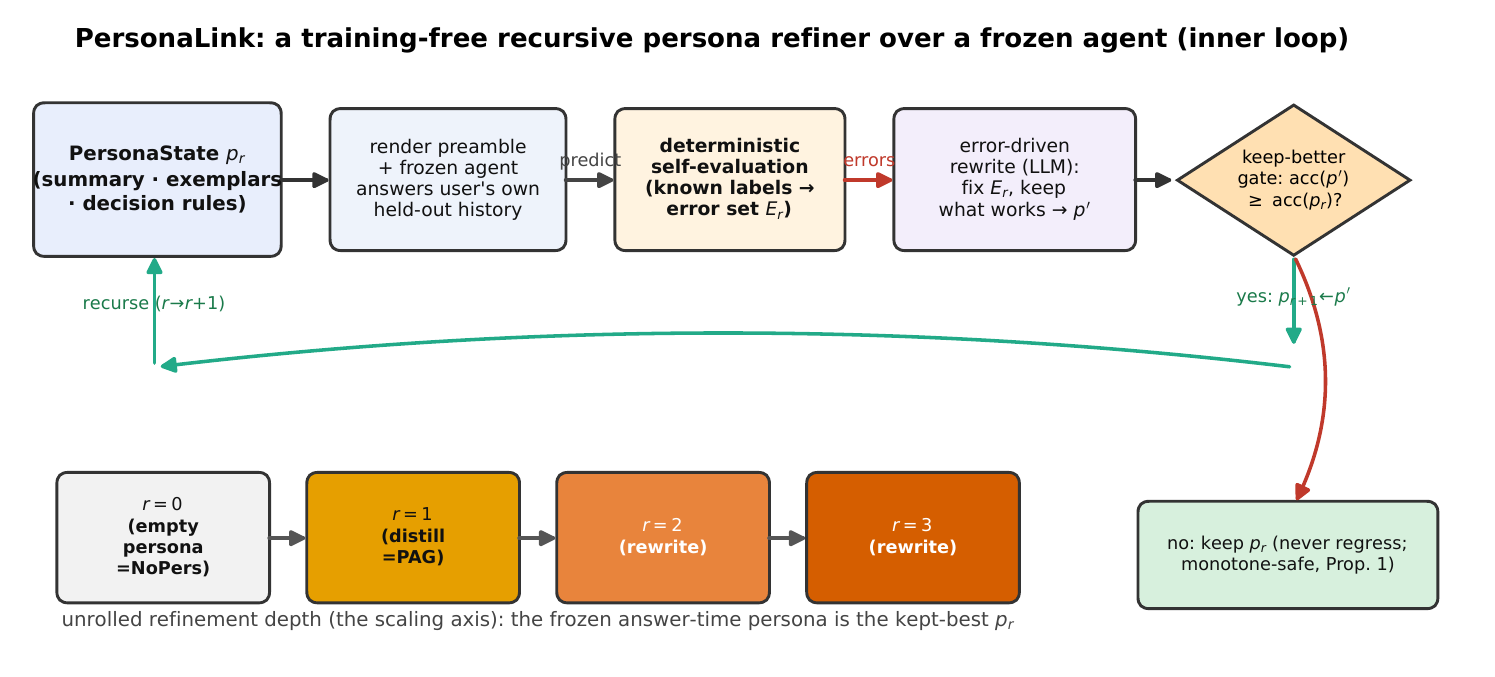}
\caption{\textbf{\persink{} architecture.} A bounded three-field \pstate{} is rendered as
a preamble, the frozen agent self-evaluates on a held-out slice of the user's own labeled
history, and the persona is rewritten from its errors behind a strict keep-better gate;
the loop unrolls over refinement depth $r$ and the kept-best persona answers test items.}
\label{fig:arch}
\end{figure}

\subsection{A lightweight, bounded persona state}
\label{sec:method-state}
The core object is \pstate{}, a compact natural-language object with exactly three
bounded fields: (1) a \textbf{preference summary}---imperative prose describing the
user's stable preferences (e.g.\ ``tends to file political news under \emph{politics},
not \emph{culture \& arts}''); (2) a small set of \textbf{exemplars}---a few representative
(input$\to$label) demonstrations drawn from the profile; and (3) a short list of
\textbf{decision rules}---conditional heuristics induced from past items (e.g.\ ``if the
review mentions a defect but says it still works, score $3$''). Each field has a hard
length cap, so the rendered persona has \emph{bounded} size regardless of how long the
user's history is. This is a deliberate design contrast with add-only context memories
\citep{ace2025,suzgun2025dc}, whose footprint grows with experience, and with retrieval,
whose injected context grows with $k$.

Two properties follow from boundedness and matter for the rest of the paper. First, the
persona is \emph{query-independent}: the same $p_u$ is prepended to every test input
$x^\star$, so it is built once and amortized across all of a user's future queries, an
$O(1)$-per-query injection cost versus retrieval's per-query selection. Second, the
persona is \emph{interpretable}: all three fields are human-readable text that can be
inspected and edited, unlike a fine-tuned adapter \citep{tan2024oppu} or a soft persona
embedding \citep{liu2025pplug}.

\subsection{Build-once recursive refinement}
\label{sec:method-loop}
\persink{} constructs $p_u$ by a training-free loop over the user's \emph{own} profile,
illustrated in Figure~\ref{fig:arch}. Crucially, because every profile item carries a
known label, the loop can verify itself deterministically without any external judge.
One refinement pass is:
\begin{enumerate}\itemsep1pt
\item \textbf{Render.} Serialize the current \pstate{} $p_u^{(r)}$ as a single preamble.
\item \textbf{Self-evaluate.} Run the frozen agent $f(\cdot\mid p_u^{(r)})$ on a held-out
leave-some-out slice $S_u$ of the user's profile whose labels are known, and collect the
error set $E_u=\{(x_i,y_i)\in S_u : f(x_i\mid p_u^{(r)})\neq y_i\}$. This is exact,
label-grounded self-verification---no LLM-as-judge.
\item \textbf{Rewrite.} Feed $E_u$ together with $p_u^{(r)}$ to the agent and ask it to
rewrite the persona to fix those errors, editing or dropping any field within the length
caps, producing a candidate $p_u'$.
\item \textbf{Keep-better gate.} Accept $p_u'$ only if its held-out score does not regress
relative to the incumbent---a strict held-out best-update
rule; otherwise keep $p_u^{(r)}$.
\end{enumerate}
We define depth $r{=}0$ as the no-persona control (empty $p_u$), $r{=}1$ as a single
distillation pass (which coincides with a profile-summary baseline, \pag{}), and
$r\!\ge\!2$ as recursive error-driven self-correction. After $r$ passes, the kept-best
persona is rendered \emph{once} to answer the actual test items; the test item is never
part of $S_u$. The only thing that varies across depths is the persona text prepended to
the identical test question, which isolates the effect of refinement.

% ============================ 4 LEARNING / REFINEMENT ============================
\section{Learning to personalize by recursion}
\label{sec:learning}

The refinement loop of \S\ref{sec:method-loop} is an inner optimization over persona
text driven entirely by the user's own held-out signal. We make two honest formal claims:
a safety guarantee that holds by construction, and an \emph{empirical} fixed-point account
of saturation. We deliberately do \emph{not} assert an unconditional contraction theorem;
instead we state falsifiable assumptions and measure the relevant quantity
(\S\ref{sec:analysis}).

\paragraph{Monotone-safety of the held-out estimator.}
Let $A_r$ denote the held-out score of the kept persona after $r$ passes and $A_0$ the
score of the empty persona.

\begin{proposition}[Monotone-safety, held-out selection]
\label{prop:safe}
Under the strict keep-better gate, the held-out score of the kept persona is
non-decreasing in $r$, i.e.\ $A_r \ge A_{r-1} \ge \dots \ge A_0$ for all $r$. In
particular, if no candidate strictly improves over the empty persona on the held-out
slice, the procedure reduces \emph{exactly} to the no-personalization control.
\end{proposition}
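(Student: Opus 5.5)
The plan is a one-step induction on the refinement depth $r$ that uses nothing except the definition of the keep-better gate in \S\ref{sec:method-loop}. The first step fixes notation. Let $p_u^{(r)}$ be the incumbent persona after $r$ passes, with $p_u^{(0)}$ the empty persona. Let $\mathrm{score}(p)$ be the deterministic held-out score of the frozen agent $f(\cdot\mid p)$ on the fixed slice $S_u$. Then $A_r=\mathrm{score}(p_u^{(r)})$.

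Two points have to be made explicit for $A_r$ to be a well-defined function of the persona.
\begin{itemize}
\item The slice $S_u$ is the same at every pass; it is not resampled.
\item $f$ is evaluated deterministically, for example with greedy decoding, so that the incumbent's score is not re-drawn between passes.
\end{itemize}

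The inductive step is then immediate. At pass $r$ the candidate $p_u'$ is accepted only if $\mathrm{score}(p_u')\ge A_{r-1}$, in which case $A_r=\mathrm{score}(p_u')\ge A_{r-1}$. Otherwise $p_u^{(r)}=p_u^{(r-1)}$ and $A_r=A_{r-1}$. In both cases $A_r\ge A_{r-1}$, and chaining these inequalities down to $A_0$ gives the full monotone chain. The quality of the rewrite step plays no role; it may be an arbitrary, even adversarial, map.

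The second claim needs a slight reading of the gate. As written in step 4, the gate accepts on ``does not regress,'' which allows ties. The "in particular" clause speaks of candidates that do not \emph{strictly} improve, so I will read the gate as accepting only on strict improvement $\mathrm{score}(p_u')>A_{r-1}$. The alternative is to add a tie-breaking convention that ties go to the incumbent.

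Under that reading, the hypothesis says every candidate $p'$ ever proposed satisfies $\mathrm{score}(p')\le A_0$. A second induction then shows $p_u^{(r)}=p_u^{(0)}$ for all $r$. If the incumbent is empty at pass $r$, the candidate fails the strict test and is rejected, so the incumbent stays empty. The kept persona is therefore literally the empty string. Since the only thing that varies across depths is the prepended persona text, the test-time prompt is identical to the \nopers{} prompt, and predictions coincide exactly.

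The main obstacle is not any calculation but this interpretive gap between ``does not regress'' and ``strict.'' With non-strict acceptance, a tied nonempty candidate could be kept, and the exact reduction would fail. I would resolve this by stating the strict-improvement (or incumbent-wins-ties) convention explicitly in the proof.

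I would also remark that the guarantee concerns the held-out estimator $A_r$ only. It does not by itself imply monotonicity of test accuracy.
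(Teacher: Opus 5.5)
Your proposal is correct and takes essentially the same approach as the paper's appendix proof: induction on $r$, with the gate giving a kept score $\max(A_{r-1},A')\ge A_{r-1}$, chaining down to $A_0$, and then observing that the kept persona stays empty. Your explicit strict-improvement (or incumbent-wins-ties) convention is genuinely needed for the ``reduces exactly'' clause, because the paper's proof accepts on $A'\ge A_{r-1}$ yet still concludes that no replacement occurs when no candidate strictly beats $A_0$, which fails for a tied nonempty candidate; your version is therefore the more careful of the two.
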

\begin{proof}[Proof sketch]
By induction on $r$. The gate replaces the incumbent only with a candidate whose held-out
score is no smaller, so $A_r\ge A_{r-1}$; the base case is $A_0$ itself. If every candidate
is rejected, the kept persona stays empty and the agent answers $f(x^\star)$. Full
statement in Appendix~\ref{app:proofs}.
\end{proof}

\paragraph{Scope caveat (stated up front).} Proposition~\ref{prop:safe} bounds the
held-out \emph{selection} signal $A_r$, not necessarily accuracy on unseen test items:
held-out improvement can fail to transfer, exactly as for any held-out model-selection
procedure. We therefore treat the Proposition as a no-harm guarantee on what the loop
optimizes, and we measure test-set behavior empirically rather than claiming it follows
from the gate.

\paragraph{An empirical fixed point, not a theorem.}
We model one refinement pass as a map $T$ on an embedding of the persona text,
$p_u^{(r+1)}=T(p_u^{(r)})$, and ask whether the iterates contract. This would hold under
two assumptions we state as falsifiable: \textbf{(A1)} error-driven editing leaves
already-correct fields unchanged, so each pass touches a shrinking part of the persona;
and \textbf{(A2)} the rewrite map is non-expansive with modulus $L{<}1$ on the relevant
region. Under A1--A2 the iterates are Cauchy and converge geometrically, so per-pass
change decays like $L^{r}$ and marginal gains vanish---a Banach-style picture. Rather than
assert A2 holds unconditionally, we \emph{measure} $L$ from the data by regressing log
per-pass displacement on $r$ (\S\ref{sec:analysis}); the data give $L{=}0.144$ with
$R^2{=}0.979$, consistent with strong contraction.

\begin{corollary}[Compute-optimal depth]
\label{cor:depth}
If per-pass gains decay geometrically while per-pass cost is constant, the compute-optimal
refinement depth is small. Empirically (\S\ref{sec:exp}) the first pass realizes the gain
and $r{=}2,3$ add no significant accuracy, so $r{=}1$ is compute-optimal here.
\end{corollary}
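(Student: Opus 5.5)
The statement to prove is Corollary~\ref{cor:depth}. The plan is to make its informal hypotheses precise and then solve a one-variable optimization in $r$. The claim has two layers: an analytic one (geometric gains plus constant cost imply a small optimal depth) and an empirical one ($r{=}1$ is optimal here).

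\textbf{Setup.} Let $g_r = A_r - A_{r-1}\ge 0$ be the per-pass gain. It is nonnegative by Proposition~\ref{prop:safe}, since the gate makes $A_r$ non-decreasing. Encode ``geometric decay'' as $g_r \le g_1 L^{r-1}$ for some $L<1$; this is exactly the shape the Banach-style picture under A1--A2 suggests. Let each pass cost a constant $c>0$, and score depth $r$ by the net utility
\[
U(r)=\lambda A_r - c\,r ,
\]
where $\lambda>0$ converts held-out score into compute units.

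\textbf{Step 1: the marginal criterion.} The increment $U(r)-U(r-1)=\lambda g_r - c$ is bounded above by $\lambda g_1 L^{r-1}-c$. This bound is strictly decreasing in $r$. Once it is negative, every later increment is negative too, so going deeper can only lower $U$.

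\textbf{Step 2: the explicit bound.} Any maximizer $r^\ast$ of $U$ can therefore be taken to satisfy
\[
r^\ast \le 1+\left\lceil \frac{\log(\lambda g_1/c)}{\log(1/L)}\right\rceil
\]
(or $r^\ast=0$ when $\lambda g_1\le c$). This bound grows only logarithmically in the gain-to-cost ratio, and it shrinks as $L\to 0$.

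\textbf{Step 3: plug in the measured modulus.} With $L=0.144$ we get $\log(1/L)\approx 1.94$. A gain-to-cost ratio as large as $e^{1.94}\approx 7$ is then needed just to justify a second pass. This is the sense in which the compute-optimal depth is ``small.''

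\textbf{Step 4: the empirical clause.} For ``$r{=}1$ is compute-optimal here,'' I would not derive anything. I would appeal to the measurements reported in \S\ref{sec:exp}: $\Delta\mathrm{Acc}=0.000$ between $r{=}1$ and $r{=}2,3$, $p=0.77$. Taking the observed $g_2,g_3\approx 0$, any $c>0$ makes $\lambda g_2 - c<0$. Combining this with Step~1 then excludes all $r\ge 2$.

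\textbf{Main obstacle.} The hard part is the link between the two kinds of decay. The measured contraction ($L=0.144$) concerns \emph{embedding displacement} of the persona, not score gains. To connect them I would assume the held-out score is Lipschitz in the persona embedding, $|A_r-A_{r-1}|\le K\|p^{(r)}-p^{(r-1)}\|$. That assumption yields $g_r\le K d_1 L^{r-1}$ directly, where $d_1$ is the first-pass displacement. It is a genuinely extra assumption, since the score is discrete and the map from embedding to accuracy need not be smooth. I would flag it explicitly alongside A1--A2 rather than claim it. The Step~4 empirical route sidesteps it for the ``here'' claim.
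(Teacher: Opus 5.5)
The paper offers no formal argument for this corollary. Its first sentence is a verbal heuristic, and its second just reads off Table~\ref{tab:depth}. Your Steps~1--3 are a legitimate formalization of that heuristic and are correct under the hypothesis $g_r\le g_1L^{r-1}$. The gap is in Step~4, where you attach the data.

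The sentence ``combining this with Step~1 then excludes all $r\ge2$'' does not follow. Step~1 propagates negativity of the \emph{upper bound} $\lambda g_1L^{r-1}-c$, not of the actual increments. Your hypothesis anchors every $g_r$ to $g_1$ rather than to $g_{r-1}$, so knowing $\lambda g_2-c<0$ says nothing about $g_4,g_5,\dots$ beyond $g_r\le g_1L^{r-1}$. The observations rule out $r=2,3$ directly and nothing more. Excluding $r\ge4$ would need $\lambda g_1L^{3}<c$, which no measurement supplies. There are two ways to repair this:
\begin{itemize}
\item restrict ``compute-optimal'' to the tested depths $r\in\{0,1,2,3\}$, which is all the paper actually claims; or
\item assume the ratio form $0\le g_{r+1}\le Lg_r$, under which $g_2=0$ forces $g_r=0$ for every $r\ge2$.
\end{itemize}
You also never exclude $r^\ast=0$. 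By your own Step~2, $r^\ast=0$ whenever $\lambda g_1\le c$. Concluding $r^\ast=1$ therefore requires the explicit extra assumption $\lambda g_1>c$: the significant $+0.10$ first-pass gain must be worth one pass of cost in your units.

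Second, your model and the data measure different quantities. You define $A_r$ as the held-out score, so that Proposition~\ref{prop:safe} gives $g_r\ge0$. But every number you plug in is a test-set number, and the paper's scope caveat says these need not track $A_r$. Indeed, the LaMP-2 test ``gain'' at $r=3$ is $0.730-0.745<0$, which is impossible for your $g_r$. (The quoted $\Delta\mathrm{Acc}=0.000$, $p=0.77$ is the $r=1\to2$ comparison only.) Define $U$ on the test metric instead. Nonnegativity is never used in Steps~1--3, so nothing is lost. Then read ``$g_2,g_3\approx0$'' as ``not significantly positive.'' On LaMP-3, $r=3$ is nominally the best persona depth in Table~\ref{tab:depth}, so there the exclusion rests entirely on non-significance.

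Finally, Step~3 and your Lipschitz bridge treat $L=0.144$ as a modulus in an inequality, but it is a least-squares slope through three averaged points. The first observed ratio is $0.21/0.90\approx0.23>0.144$, so $\lVert\Delta_2\rVert\le d_1L$ already fails with the fitted value. The bridge also yields the constant $Kd_1$, not $g_1$. Using the worst observed ratio lowers your threshold for a second pass from about $7$ to about $4.3$. The qualitative ``small depth'' conclusion survives, but the number should come from a bound, not a fit.
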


% ============================ 5 EXPERIMENTS ============================
\section{Experiments}
\label{sec:exp}

\paragraph{Setup.} All methods share one frozen 7B-class backbone with identical greedy
decoding; the only difference across methods is what is placed in context. We evaluate on
$n{=}200$ users each for LaMP-2 and LaMP-3 (frozen manifest, seed $12345$, the unit of
analysis is the user/question). Metrics are deterministic: Accuracy and macro-F1 for
LaMP-2; MAE and RMSE for LaMP-3 (lower is better). Significance uses McNemar's test on
LaMP-2 correctness and a paired bootstrap on LaMP-3 absolute error; intervals are bootstrap
$95\%$ CIs over items. Table~\ref{tab:setup} consolidates the datasets and protocol; the
two tasks share every choice except the metric regime, so the distill-vs-retrieve
comparison is held apples-to-apples. Table~\ref{tab:depth} reports the depth-scaling study
and Table~\ref{tab:broader} the broader comparison, with retrieval \textbf{bolded as the
best non-oracle method} throughout---an honest presentation of a method we do not beat.

\input{tables/table_setup.tex}

\input{tables/table_depth.tex}

\subsection{Personalization is real}
\label{sec:exp-real}
Using the right user's history helps, and not because of extra tokens. In
Table~\ref{tab:broader}, \persink{} and its $r{=}1$ distillation lift LaMP-2 accuracy from
the no-persona floor of $0.645$ to $0.745$--$0.755$, a gain that is significant against
\nopers{} ($\Delta\mathrm{Acc}{=}{+}0.10$, $p{=}0.003$, McNemar). Critically, it also beats
the \emph{random-user} control ($0.665$ accuracy; $p{<}0.05$), which injects a different
user's profile at the same token budget---so the improvement is attributable to the correct
user's history, not to a longer prompt. The same pattern holds on LaMP-3, where the
random-user control is significantly worse than \persink{} ($p{=}0.016$, paired bootstrap).
Personalization is therefore a genuine effect in both tasks.

\subsection{Distillation matches retrieval on classification}
\label{sec:exp-class}
On the classification task, a bounded distilled persona is competitive with retrieval. In
Table~\ref{tab:broader}, \persink{} reaches $0.745$--$0.755$ LaMP-2 accuracy versus
$0.760$ for \rag{}-3 and $0.765$ for \rag{}-5; the paired tests of \persink{}~$(r{=}2)$
against \rag{}-3 and \rag{}-5 are \emph{not} significant ($p{=}0.73$ and $p{=}0.60$,
McNemar). In other words, on a $15$-way label-prediction task, a once-built, query-independent,
bounded persona is statistically indistinguishable from injecting the three-to-five most
relevant past items per query. This is the positive half of our headline, and it is a
\emph{tie}, reported as a tie---not a win.

\subsection{Retrieval dominates on regression and at scale}
\label{sec:exp-reg}
The picture inverts on the regression task, and retrieval pulls further ahead as it scales.
On LaMP-3, \rag{}-5 attains $0.285$ MAE while \persink{}~$(r{=}2)$ attains $0.455$, a gap
that is highly significant ($\Delta\mathrm{MAE}{=}0.17$, $p{<}10^{-4}$, paired bootstrap);
\rag{}-3 ($0.290$) is similarly far ahead ($p{=}0.0004$). Even the persona's better depths
($r{=}1{:}\,0.425$, $r{=}3{:}\,0.430$) do not close it. Fine-grained ordinal personalization
appears to need concrete past ratings in context, which a bounded prose persona cannot
losslessly summarize. Moreover, retrieval does \emph{not} plateau: extending $k$ improves
LaMP-2 accuracy to $0.785$ ($k{=}10$) and $0.790$ ($k{=}20$), and LaMP-3 MAE to $0.290$
($k{=}10$) and $0.250$ ($k{=}20$). Retrieval keeps converting more history into accuracy,
whereas the bounded persona, by design, cannot---so retrieval is the stronger method overall.

Table~\ref{tab:scaling} lays this scaling out against the per-query context budget, and
Figure~\ref{fig:pareto} plots the same quality-versus-cost frontier. The two views agree on
the asymmetry. On classification (Figure~\ref{fig:pareto}a) the bounded $\sim$250-token
persona sits essentially \emph{on} the retrieval frontier between \rag{}-3 and \rag{}-5:
pushing retrieval all the way to $k{=}20$ buys only $+0.045$ accuracy for roughly $8\times$
the per-query context, so the persona is the Pareto-efficient operating point. On regression
(Figure~\ref{fig:pareto}b) the persona is strictly dominated---retrieval is already $0.165$
MAE better at $k{=}3$ and a further $0.205$ better at $k{=}20$, and the curve is still
falling. This is the cost--quality statement of our headline: a constant-cost persona is a
sufficient substitute for retrieval exactly on the label-prediction task, and not when the
agent must reproduce a calibrated numeric scale.

\input{tables/table_scaling.tex}

\begin{figure}[t]
\centering
\includegraphics[width=\textwidth]{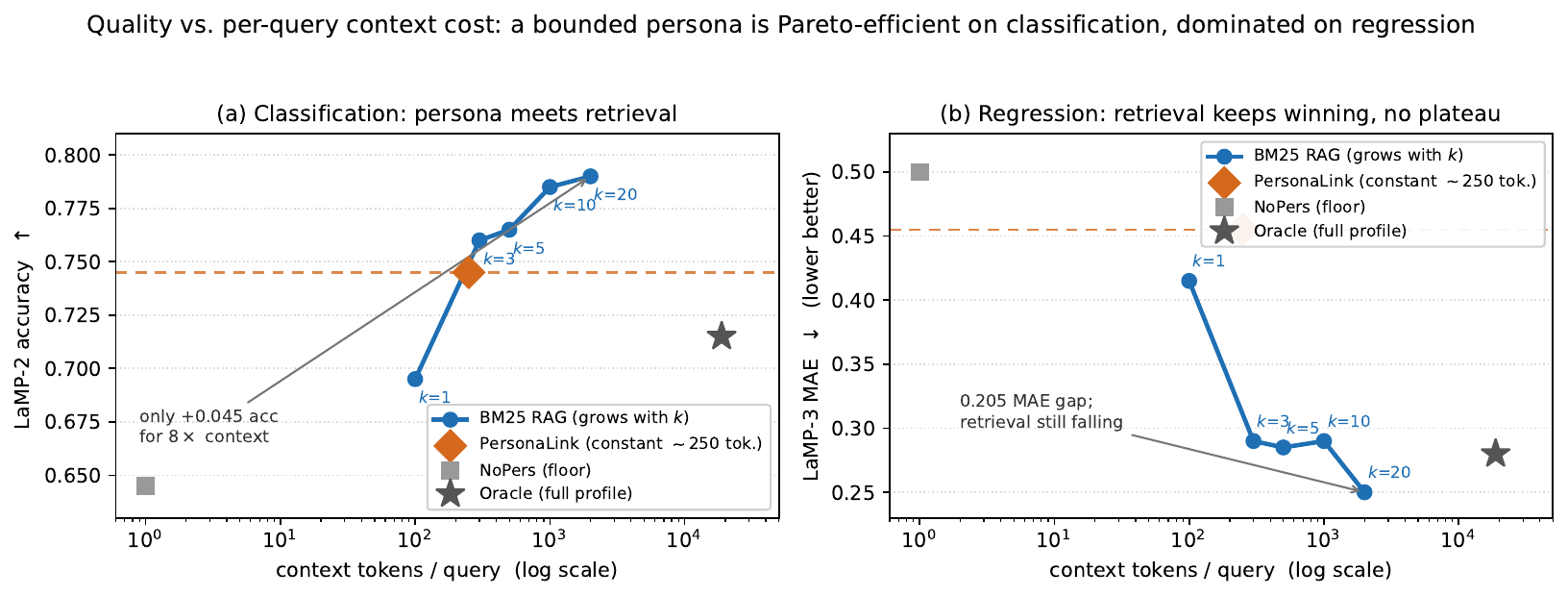}
\caption{\textbf{Quality versus per-query context cost.} BM25 retrieval (blue) trades a
growing per-query token budget for quality; \persink{}'s bounded persona (orange diamond)
is built once at a fixed $\sim$250 tokens. \textbf{(a)} On LaMP-2 the persona lands on the
retrieval frontier near \rag{}-3/5, so retrieval's further gains cost $8\times$ the context
for $+0.045$ accuracy. \textbf{(b)} On LaMP-3 retrieval keeps falling with no plateau and
the persona is dominated. The oracle ($\sim$18.8k tokens) and \nopers{} floor anchor the
extremes.}
\label{fig:pareto}
\end{figure}

\subsection{Recursion saturates}
\label{sec:exp-sat}
Recursion beyond the first pass does not help. In Table~\ref{tab:depth} the jump from
$r{=}0$ to $r{=}1$ is the entire effect ($+0.10$ LaMP-2 accuracy, significant), while the
$r{=}1{\to}r{=}2$ recursion delta is $\Delta\mathrm{Acc}{=}{+}0.000$ ($p{=}0.77$, McNemar)
on LaMP-2 and $\Delta\mathrm{MAE}{=}{+}0.005$ ($p{=}0.96$, paired bootstrap) on LaMP-3; if
anything, LaMP-2 accuracy drifts down at $r{=}3$ ($0.730$). Figure~\ref{fig:landscape}
plots both metrics across depth and makes the shape plain: a sharp $r{=}0{\to}1$ step
followed by a flat plateau for $r{\ge}1$ on both tasks, with the depth-$1$ curves and the
depth-$2,3$ curves overlapping within their intervals. The keep-better gate guarantees
the held-out objective never regresses (Proposition~\ref{prop:safe}), but on the test set
extra passes neither help nor reliably hurt. As predicted by Corollary~\ref{cor:depth},
$r{=}1$ is compute-optimal; we explain the plateau mechanistically in \S\ref{sec:analysis}.
Finally, the \textbf{oracle} that reads the full true profile reaches $0.715$ LaMP-2
accuracy---\emph{below} \rag{}-5's $0.765$---indicating that more context is not strictly
better and that long profiles distract the backbone, while it does win on LaMP-3 MAE
($0.280$) where concrete ratings matter.

\begin{figure}[t]
\centering
\includegraphics[width=0.82\textwidth]{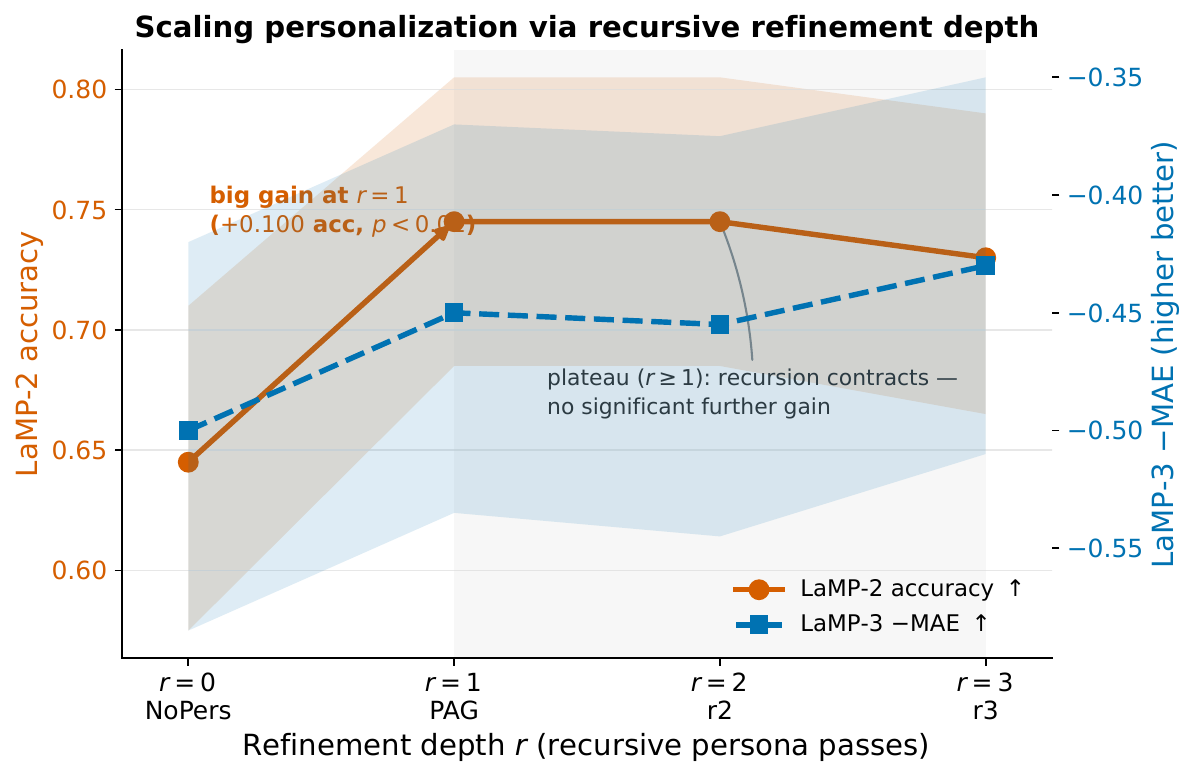}
\caption{\textbf{Personalization gain is realized in the first pass and then plateaus.}
LaMP-2 accuracy (left axis, $\uparrow$) and LaMP-3 MAE (right axis, $\uparrow$ for
$-$MAE) across refinement depth $r{=}0{\to}3$. The $r{=}0{\to}1$ step carries the entire
gain ($+0.10$ accuracy); $r{\ge}1$ is flat within the shaded intervals, the test-set
counterpart of the embedding contraction in Figure~\ref{fig:contraction}.}
\label{fig:landscape}
\end{figure}

\input{tables/table_broader.tex}

% ============================ 6 ANALYSIS ============================
\section{Analysis}
\label{sec:analysis}

\begin{figure}[t]
\centering
\includegraphics[width=\textwidth]{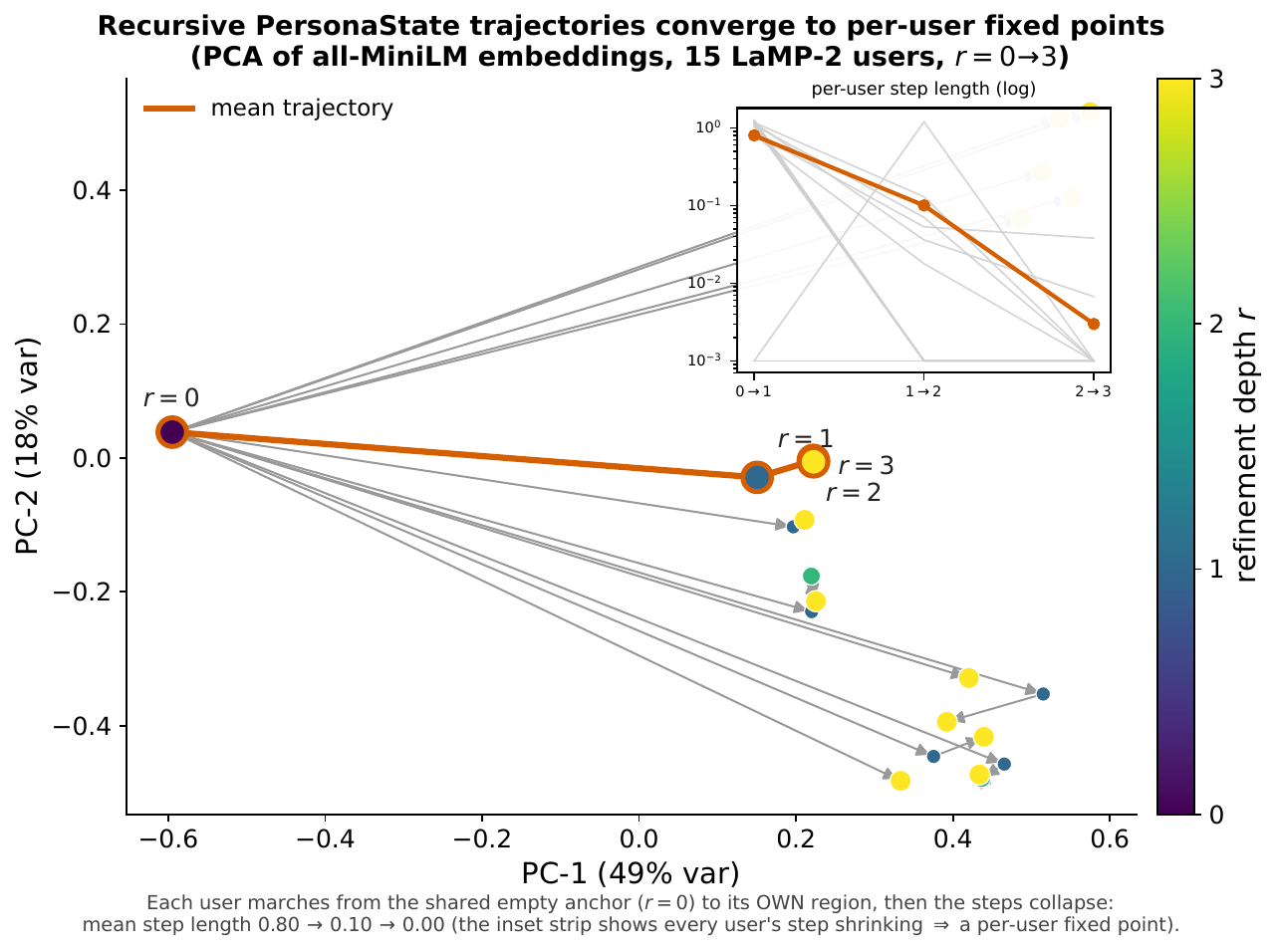}
\caption{\textbf{Recursive personas converge to per-user fixed points.} PCA of MiniLM
embeddings for $15$ LaMP-2 users over $r{=}0{\to}3$: each user departs the shared empty
persona, reaches its own region, and then per-pass steps collapse (inset: every user's step
length shrinks toward zero).}
\label{fig:pca}
\end{figure}

\paragraph{The persona contracts to a fixed point.} Figure~\ref{fig:pca} embeds each user's
persona with a local sentence encoder \citep{reimers2019sbert} and projects the
$r{=}0{\to}3$ trajectory by PCA. Every user marches from the shared empty persona to its own
region on the first pass, after which the per-pass step collapses; the mean step length
shrinks $0.90{\to}0.21{\to}0.02$. Figure~\ref{fig:contraction} fits this decay: a geometric
model $\lVert\Delta\rVert\propto L^{k}$ gives modulus $L{=}0.144$ with $R^2{=}0.979$, i.e.\
each pass moves about a seventh as far as the previous one. This is the empirical fixed-point
predicted (under A1--A2) in \S\ref{sec:learning}, and it mechanistically explains the
saturation in Table~\ref{tab:depth}: after the first pass there is almost nothing left to
change. We present this as measured contraction, not as a proven theorem.

\begin{figure}[t]
\centering
\includegraphics[width=0.92\textwidth]{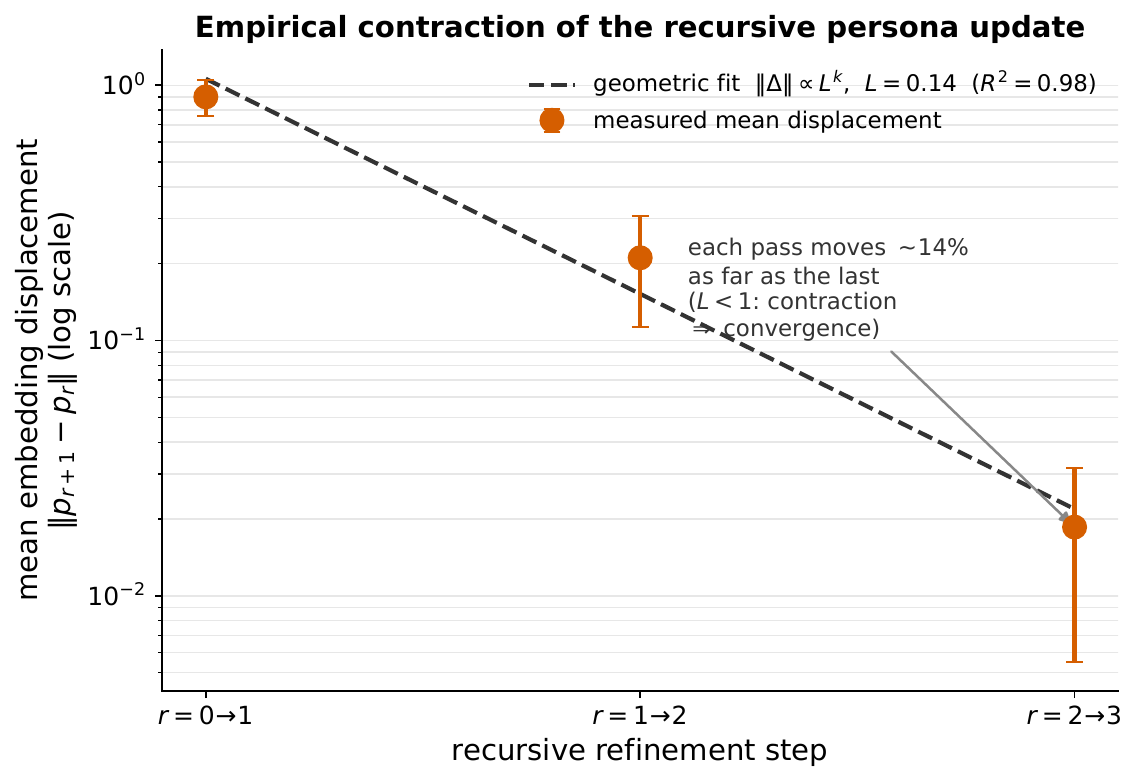}
\caption{\textbf{The recursive persona update contracts geometrically.} Mean per-pass
embedding displacement falls $0.90{\to}0.21{\to}0.02$ and is fit by $\lVert\Delta\rVert\propto
L^{k}$ with $L{=}0.144$, $R^2{=}0.979$ ($L{<}1$ $\Rightarrow$ convergence).}
\label{fig:contraction}
\end{figure}

\paragraph{The task-type asymmetry.} Reading Tables~\ref{tab:depth}--\ref{tab:broader}
together, the deciding factor is metric regime, not method family. For \emph{classification}
the persona need only convey a sorting rule (``this user files X under Y''), which a bounded
prose summary captures well enough to tie retrieval. For \emph{regression} the agent must
reproduce a calibrated numeric scale, and the bounded persona cannot encode the
per-rating granularity that concrete retrieved examples supply---hence the large, significant
MAE gap and retrieval's continued gains with $k$. This asymmetry, rather than any aggregate
win, is the contribution: it predicts when a cheap, interpretable, query-independent persona
is a sufficient substitute for retrieval and when it is not. The field ablation in
Table~\ref{tab:ablation} echoes it from inside the persona: ablating the \emph{decision
rules} costs the most LaMP-2 accuracy ($-0.045$), while ablating the \emph{exemplars} costs
the most LaMP-3 MAE ($+0.035$)---the sorting heuristics drive classification and the
concrete graded examples drive regression---yet no single field alone recovers the full
state and all three remain above the \nopers{} floor, so the gain is genuinely distributed
across the bounded fields.

\input{tables/table_ablation.tex}

\paragraph{Qualitative evolved personas.} Inspecting kept personas (Appendix~\ref{app:examples})
shows the loop converting profile statistics into compact rules---e.g.\ promoting an
under-used category after the agent mislabels it, or adding a rule for ``defective but usable''
reviews. The edits are legible and bounded, the practical upside of distillation even where it
does not beat retrieval. They also show why depth saturates: by $r{=}2$ the rewrites are
near-paraphrases of $r{=}1$, consistent with the contraction in Figure~\ref{fig:contraction}.

% ============================ 7 RELATED WORK ============================
\section{Related work}
\label{sec:related}

\paragraph{Personalizing LLMs.} LaMP \citep{salemi2024lamp} formalized personalized
prediction from user profiles and established retrieval as a strong baseline, refined by
retrieval-optimization methods \citep{salemi2024ropg} and surveyed broadly
\citep{zhang2024personalizationsurvey,liu2025pllmsurvey,kumar2024longlamp,tseng2024persona,wu2023recsurvey}. Distillation alternatives summarize the profile
\citep{richardson2023pag}, personalize parameters \citep{tan2024oppu}, or learn a soft persona
embedding \citep{liu2025pplug}; agentic variants build a persona at test time
\citep{zhang2025personaagent}. We do not propose a better personalizer; we characterize
\emph{when} a bounded, query-independent, interpretable persona suffices, and report that
retrieval remains the stronger method overall (with no plateau as $k$ grows), with distillation
matching it only on classification.

\paragraph{Retrieval and retrieval-augmented generation.}
Retrieval augments a model with external context at inference time, from sparse
lexical matching to dense and late-interaction
retrievers~\citep{karpukhin2020dpr,khattab2020colbert,izacard2021contriever,xiong2021ance,formal2021splade}
and end-to-end retrieval-augmented
generation~\citep{lewis2020rag,guu2020realm,izacard2021fid,borgeaud2022retro,ram2023incontextralm,shi2024replug,asai2024selfrag,khattab2022dsp}.
For personalization, retrieval over a user's own history is the dominant and
strongest baseline~\citep{salemi2024ropg,mysore2023pearl}; we treat {\sc BM25}
retrieval as the honest ceiling our bounded persona is measured against. Which
in-context examples to retrieve is itself a learning
problem~\citep{liu2022goodexamples,rubin2022epr,wang2024llmretriever}.

\paragraph{Distillation: persona, profile, and parameter-efficient personalization.}
The alternative to retrieval is to distill a user's history once. Classical
persona-grounded dialogue learns a profile from
examples~\citep{zhang2018personachat,wolf2019transfertransfo,lin2019metapersona};
recent {\sc LLM} methods summarize the profile in natural
language~\citep{richardson2023pag}, encode it parameter-efficiently per
user~\citep{tan2024oppu,tan2024perpcs,han2023personapkt}, learn a soft user
embedding~\citep{liu2025pplug,ning2024userllm}, store long-term
memory~\citep{packer2023memgpt,zhong2024memorybank,zhang2023malp}, or infer user
cues at inference~\citep{wang2023cuecot,zhang2025personaagent}. Our \pstate{} is a
deliberately bounded, query-independent instance of this family.

\paragraph{Role-playing persona agents.}
A related line assigns or learns a \emph{character} persona for an agent to
embody~\citep{shao2023characterllm,wang2024rolellm}, with dedicated benchmarks and
evaluations~\citep{tu2024charactereval,samuel2024personagym} and surveys
connecting role-play to personalization~\citep{chen2024rplasurvey,tseng2024persona}.
We target the \emph{user} persona (personalization) rather than the
assigned-character setting.

\paragraph{Personalized alignment.}
Beyond profiles, a growing literature aligns models to individual or group
preferences via parameter merging, lightweight user models, latent variables, or
low-rank rewards~\citep{jang2023personalizedsoups,li2024prlhf,poddar2024vpl,zhao2023gpo,bose2025lore},
supported by participatory preference datasets~\citep{kirk2024prism} and surveys of
pluralistic alignment~\citep{zhang2025prefalignsurvey}. These adapt model
\emph{weights or rewards}; we keep the backbone frozen and adapt only a textual
persona.

\paragraph{Recursive and self-evolving agents.} \persink{}'s refinement loop builds on
iterative self-refinement and self-correction \citep{madaan2023selfrefine,shinn2023reflexion,gou2023critic,bai2022constitutional,huang2022selfimprove}, reflective prompt
evolution and prompt optimization \citep{agrawal2025gepa,zhou2023ape,yang2024opro,fernando2023promptbreeder,khattab2024dspy,yang2026tooltree}, self-rewarding and self-training \citep{zelikman2022star,yuan2024selfrewarding,wu2024metarewarding}, and held-out self-evolution \citep{yang2026evotool};
its scaling-by-recursion framing follows recursive multi-agent systems
\citep{yang2026recursivemas} and latent collaboration \citep{zou2025latentmas}. Recursion is
one way to spend more test-time compute \citep{wang2023selfconsistency,yao2023tot,snell2024testtime,muennighoff2025s1}; consistent with reports that intrinsic self-correction can stall \citep{huang2024selfcorrect,kamoi2024selfcorrection}, we find the gains saturate after one pass. We concede the
refinement-loop \emph{idea} is not new. Our contribution is to apply it to a \emph{bounded}
persona representation, prove a no-harm guarantee via a held-out keep-better gate, and---most
importantly---report the honest negative that the recursion \emph{saturates after one pass},
with an empirical contraction account of why.

\paragraph{Agent memory and context engineering.} Add-only context memories accumulate
experience as growing text \citep{ace2025,suzgun2025dc,wang2024awm,zhao2024expel,packer2023memgpt}.
\persink{} deliberately uses a \emph{bounded} state, trading unbounded accumulation for a fixed,
query-independent footprint; our results show this bound is not free---it is what costs us the
regression task and the scaling that retrieval enjoys. Related agent lines on tool use and
safety \citep{yao2023react,park2023generative,su2025autonomysurvey,yang2026misalignment} motivate
interpretable, inspectable personalization but are orthogonal to the distill-vs-retrieve question.

% ============================ 8 CONCLUSION ============================
\section{Conclusion and limitations}
\label{sec:conclusion}

We asked when a distilled persona can match retrieval for personalized LLM agents and gave an
honest, asymmetric answer. \persink{}---a bounded, interpretable, query-independent three-field
persona built once by a monotone-safe recursive loop---\emph{ties} retrieval on LaMP-2
classification but is decisively beaten on LaMP-3 regression, and retrieval keeps improving with
$k$ while the bounded persona cannot. Personalization is real (it beats no-persona and random-user
controls), but the recursion saturates after a single pass, which we explain by a measured geometric
contraction of the persona representation ($L{=}0.144$, $R^2{=}0.979$). The value of \persink{} is
thus a \emph{characterization}---bounded distillation is sufficient exactly for label-classification,
not for fine-grained regression or at scale---plus a cheap, inspectable artifact, not a new state of
the art.

\paragraph{Limitations.} (i) Retrieval wins overall; we do not beat it and present it as the best
non-oracle method everywhere. (ii) A single 7B-class frozen backbone; the asymmetry may shift with
scale or task family, and the oracle's LaMP-2 result hints the backbone is context-sensitive.
(iii) Recursion gives no test-set benefit past $r{=}1$, so the recursive machinery's practical payoff
here is the safety guarantee and the fixed-point analysis, not accuracy. (iv) The keep-better gate is
held-out-only, so Proposition~\ref{prop:safe} bounds the selection signal, not unseen test items.
(v) Two LaMP tasks in English; generality to other domains and to agentic, multi-turn personalization
is future work.

\bibliographystyle{unsrtnat}
\bibliography{references,references_extra}

\appendix
\section{Proof of Proposition~\ref{prop:safe}}
\label{app:proofs}
We prove $A_r\ge A_{r-1}$ for all $r\ge1$ by induction. Let $A_r$ be the held-out score of the kept
persona after pass $r$ on the fixed slice $S_u$, and $A_0$ the score of the empty persona $p^{(0)}$.
\emph{Base case.} Before any pass the kept persona is $p^{(0)}$ with score $A_0$. \emph{Inductive
step.} Assume the kept persona after pass $r-1$ has score $A_{r-1}$. Pass $r$ proposes a candidate
$p'$ with held-out score $A'$. The strict keep-better gate sets the kept persona to $p'$ iff
$A'\ge A_{r-1}$, and otherwise retains the incumbent; in both cases the new kept score is
$\max(A_{r-1},A')\ge A_{r-1}$. Hence $A_r\ge A_{r-1}$, and by transitivity $A_r\ge A_0$. If no
candidate ever satisfies $A'\ge A_0$ strictly, no replacement occurs and the kept persona remains
$p^{(0)}$, so the method answers $f(x^\star)$, i.e.\ it reduces exactly to the no-personalization
control. As stated in \S\ref{sec:learning}, this bounds the held-out selection signal, not
necessarily accuracy on unseen test items. $\qquad\blacksquare$

\section{Contraction measurement}
\label{app:contraction}
We embed each kept persona $p_u^{(r)}$ with a frozen MiniLM sentence encoder
\citep{reimers2019sbert} (no API), compute per-pass displacement $\lVert e(p_u^{(r+1)})-e(p_u^{(r)})\rVert$,
average over the $15$ analyzed LaMP-2 users, and regress $\log\lVert\Delta\rVert$ on the step index to
estimate $L$. The mean displacements are $0.90$ ($r{=}0{\to}1$), $0.21$ ($r{=}1{\to}2$), $0.02$
($r{=}2{\to}3$); the geometric fit gives $L{=}0.144$ with $R^2{=}0.979$ (Figure~\ref{fig:contraction}).
This is an empirical measurement of the modulus under assumptions A1--A2 of \S\ref{sec:learning}; it is
not a proof that the rewrite map is globally contractive.

\section{Additional experimental details}
\label{app:details}
All runs use one frozen 7B-class backbone with greedy decoding; the only per-method difference is the
in-context text (retrieved demos, a persona, or nothing). LaMP-2 is scored with scikit-learn Accuracy and
macro-F1 over the $15$ canonical categories; LaMP-3 with MAE and RMSE over the $1$--$5$ scale, with robust
label parsing (lower-casing, nearest-valid-class, clamp-to-range) but \emph{no} LLM judge. Retrieval uses
BM25 over the user's profile; the random-user control injects a different user's profile at a matched token
budget. The held-out slice $S_u$ is a leave-some-out split of the user's own profile and never contains the
test item. The depth-scaling and broader-comparison numbers are exactly those of
Tables~\ref{tab:depth}--\ref{tab:broader}; the retrieval-scaling values ($k{\in}\{10,20\}$: LaMP-2
$0.785/0.790$ accuracy, LaMP-3 $0.290/0.250$ MAE) are reported in \S\ref{sec:exp-reg}.

\section{Qualitative evolved personas}
\label{app:examples}
We show two verbatim kept \pstate{}s from the runs (persona texts taken from the logged per-user
refinement traces).

\paragraph{LaMP-2 (news categorization), user \texttt{1136}.} Profile of $85$ items, held-out slice of $8$.
The first pass lifts held-out accuracy from $0.875$ (empty) to $1.000$ and the persona then stops changing
($r{=}1,2,3$ are byte-identical), a direct instance of the contraction in
Figure~\ref{fig:contraction}. The kept persona ($6$ exemplars, $6$ rules; preference summary and rules quoted):
\begin{quote}\footnotesize
\emph{Preference summary:} ``Focus on politics with a strong preference for articles related to Hillary
Clinton, Democratic Party, and economic issues. Occasionally interested in entertainment and business.''\\
\emph{Decision rules:} if mentions Hillary Clinton $\to$ politics; if discusses Democratic Party internal
affairs $\to$ politics; if involves Wall Street or banking regulation $\to$ politics; if includes criticism
of Trump's economic policies $\to$ politics; if focuses on Bernie Sanders' economic stance $\to$ politics;
if mentions Goldman Sachs $\to$ politics.
\end{quote}

\paragraph{LaMP-3 (product rating), user \texttt{211070}.} Profile held-out slice of $8$; here both $r{=}2$
rewrites are accepted by the gate and held-out accuracy rises $0.50\to0.75$. The kept persona encodes a
scoring scale but, as \S\ref{sec:exp-reg} shows, such bounded prose rules cannot match retrieved concrete
ratings on MAE:
\begin{quote}\footnotesize
\emph{Preference summary:} ``Prefers higher scores, giving 5 stars for emotional or well-written content,
4 stars for engaging plots, and 3 stars for average reads. Lower ratings are given for rushed endings or
lack of depth.''\\
\emph{Decision rules (excerpt):} if `loved'/`inspiring' $\to$ score 5; if `brilliantly written'/`kept me on
the edge of my seat' $\to$ score 4.5; if `interesting plot'/`well-developed characters' $\to$ score 4; if
`rushed at the end'/`lack of depth' $\to$ score 3.5.
\end{quote}
In both cases the kept persona is short, legible, and bounded; and in the LaMP-2 trace the $r{=}1$ and
$r{=}2$ personas are identical, the textual counterpart of the saturation reported in \S\ref{sec:exp-sat}.

\end{document}

%% file: tables/table_setup.tex
\begin{table*}[t]
\centering
\caption{\textbf{Datasets and evaluation protocol.} Both LaMP tasks share one frozen
7B-class backbone with identical greedy decoding; the only per-method difference is the
in-context text. The two tasks deliberately span the classification and regression metric
regimes, with everything else (users, splits, decoding, scoring) held fixed so that the
distill-vs-retrieve comparison is apples-to-apples.}
\label{tab:setup}
\begin{tabular}{l l l}
\toprule
& \textbf{LaMP-2} (news categorization) & \textbf{LaMP-3} (product rating) \\
\midrule
Task type            & 15-way classification               & ordinal 1--5 regression \\
Label space          & \{politics, sports, \dots\} (15)    & $\{1,2,3,4,5\}$ \\
Primary / secondary metric & Accuracy $\uparrow$ / macro-F1 $\uparrow$ & MAE $\downarrow$ / RMSE $\downarrow$ \\
Metric direction     & higher is better                    & lower is better \\
\addlinespace[2pt]
Eval.\ users ($n$)   & 200                                 & 200 \\
Median profile size $m_u$ & $\sim$188 items                & $\sim$188 items \\
Held-out slice $S_u$ & leave-some-out of $H_u$             & leave-some-out of $H_u$ \\
Frozen manifest seed & 12345                               & 12345 \\
\addlinespace[2pt]
Backbone             & frozen 7B-class, greedy             & frozen 7B-class, greedy \\
Retriever            & BM25 over $H_u$                     & BM25 over $H_u$ \\
Persona budget       & $\sim$250 tokens (3 bounded fields) & $\sim$250 tokens (3 bounded fields) \\
LLM judge            & none (deterministic, sklearn)       & none (deterministic, sklearn) \\
\addlinespace[2pt]
Significance test    & McNemar (paired correctness)        & paired bootstrap (abs.\ error) \\
Interval             & bootstrap 95\% CI / items           & bootstrap 95\% CI / items \\
\bottomrule
\end{tabular}
\\[2pt]
{\footnotesize The unit of analysis is the user/question. No test item ever enters $S_u$;
label parsing is robust (lower-casing, nearest-valid-class, clamp-to-range) but contains no
generative judge anywhere in the scoring path.}
\end{table*}

%% file: tables/table_depth.tex
\begin{table*}[t]
\centering
\caption{\textbf{Scaling personalization via recursive refinement depth $r$.}
\textsc{PersonaLink} unrolled to $r{=}0$ (\textsc{NoPers}), $r{=}1$ (\textsc{PAG};
one distillation pass), and $r{=}2,3$ (recursive self-rewrite behind a keep-better
gate); only the prepended persona text differs across rows. The first pass gives the
gain ($\mathbf{**}$: sig.\ over $r{=}0$); further recursion does not (the contraction
plateau).}
\label{tab:depth}
\begin{tabular}{l cc cc}
\toprule
& \multicolumn{2}{c}{\textbf{LaMP-2} (news cat., 15-way)} & \multicolumn{2}{c}{\textbf{LaMP-3} (rating, $\downarrow$)} \\
\cmidrule(lr){2-3}\cmidrule(lr){4-5}
\textbf{Refinement depth} & Acc.\ $\uparrow$ & macro-F1 $\uparrow$ & MAE $\downarrow$ & RMSE $\downarrow$ \\
\midrule
NoPers $(r{=}0)$ & 0.645 & 0.388 & 0.500 & 0.794 \\
PAG $(r{=}1)$\,$^{**}$ & \textbf{0.745} & 0.485 & 0.450 & 0.742 \\
\textbf{PersonaLink} $(r{=}2)$\,$^{**}$ & 0.745 & \textbf{0.514} & 0.455 & 0.765 \\
\textbf{PersonaLink} $(r{=}3)$\,$^{**}$ & 0.730 & 0.495 & \textbf{0.430} & \textbf{0.728} \\
\bottomrule
\end{tabular}
\\[2pt]
{\footnotesize $r{=}1\!\to\!r{=}2$ delta: $\Delta\mathrm{Acc}{=}{+}0.000$ ($p{=}0.77$,
McNemar) on LaMP-2; $\Delta\mathrm{MAE}{=}{+}0.005$ ($p{=}0.96$, paired bootstrap) on
LaMP-3 --- not significant. Deterministic metrics, $n{=}200$ items/task.}
\end{table*}

%% file: tables/table_scaling.tex
\begin{table*}[t]
\centering
\caption{\textbf{Retrieval keeps converting context into accuracy; the bounded persona
cannot.} Quality versus per-query context budget as the number of retrieved
demonstrations $k$ grows, against \textsc{PersonaLink}'s \emph{constant} $\sim$250-token
persona. ``Ctx.\ tok./q'' is the approximate in-context token budget injected
\emph{per query} (BM25 demos cost $\sim$100 tokens each; the persona is built once and is
$x^\star$-independent). On LaMP-2 retrieval gains saturate near the bounded persona, but on
LaMP-3 it keeps improving monotonically with no plateau; \textbf{bold} marks the best
non-oracle cell per column.}
\label{tab:scaling}
\begin{tabular}{l c cc cc}
\toprule
& & \multicolumn{2}{c}{\textbf{LaMP-2} (news cat., 15-way)} & \multicolumn{2}{c}{\textbf{LaMP-3} (rating, $\downarrow$)} \\
\cmidrule(lr){3-4}\cmidrule(lr){5-6}
\textbf{Method} & \textbf{Ctx.\ tok./q} & Acc.\ $\uparrow$ & macro-F1 $\uparrow$ & MAE $\downarrow$ & RMSE $\downarrow$ \\
\midrule
\multicolumn{6}{l}{\emph{Retrieval, growing context with $k$}} \\
\quad \textsc{RAG}-1  & $\sim$100  & 0.695 & 0.531 & 0.415 & 0.711 \\
\quad \textsc{RAG}-3  & $\sim$300  & 0.760 & 0.607 & 0.290 & 0.616 \\
\quad \textsc{RAG}-5  & $\sim$500  & 0.765 & \textbf{0.608} & 0.285 & 0.621 \\
\quad \textsc{RAG}-10 & $\sim$1000 & 0.785 & 0.601 & 0.290 & 0.610 \\
\quad \textsc{RAG}-20 & $\sim$2000 & \textbf{0.790} & 0.604 & \textbf{0.250} & \textbf{0.589} \\
\addlinespace[2pt]
\multicolumn{6}{l}{\emph{Bounded persona, constant context (ours)}} \\
\quad \textsc{PAG} $(r{=}1)$ & $\sim$250 (fixed) & 0.745 & 0.485 & 0.450 & 0.742 \\
\quad \textbf{PersonaLink} $(r{=}2)$ & $\sim$250 (fixed) & 0.745 & 0.514 & 0.455 & 0.765 \\
\addlinespace[2pt]
\multicolumn{6}{l}{\emph{Upper bound}} \\
\quad \textsc{Oracle} (full profile) & $\sim$18.8k & 0.715 & 0.526 & 0.280 & 0.608 \\
\bottomrule
\end{tabular}
\\[2pt]
{\footnotesize Token budgets are per-query in-context cost; the persona's is amortized once
across all of a user's queries ($O(1)$/query). On LaMP-2, RAG-20 buys only $+0.045$ accuracy
over the persona for $8\times$ the per-query context; on LaMP-3 the same extra context buys a
decisive $0.205$ MAE reduction. $n{=}200$ items/task, deterministic metrics.}
\end{table*}

%% file: tables/table_broader.tex
\begin{table*}[t]
\centering
\caption{\textbf{Broader comparison with alternative personalization methods}
(11 methods, one frozen backbone, identical greedy decoding). \textbf{Bold} is the
best per column: strong BM25 \textsc{RAG} is the best non-oracle method on both tasks
and recursive refinement does not surpass it---personalization beats the controls,
but the learned persona plateaus below retrieval.}
\label{tab:broader}
\begin{tabular}{l cc cc}
\toprule
& \multicolumn{2}{c}{\textbf{LaMP-2} (news cat., 15-way)} & \multicolumn{2}{c}{\textbf{LaMP-3} (rating, $\downarrow$)} \\
\cmidrule(lr){2-3}\cmidrule(lr){4-5}
\textbf{Method} & Acc.\ $\uparrow$ & macro-F1 $\uparrow$ & MAE $\downarrow$ & RMSE $\downarrow$ \\
\midrule
\multicolumn{5}{l}{\emph{Lower / control}} \\
\quad \textsc{NoPers} (no personalization) & 0.645$^{**}$ & 0.388 & 0.500 & 0.794 \\
\quad \textsc{RandomUser} (token-confound ctrl) & 0.665$^{*}$ & 0.393 & 0.565$^{*}$ & 0.908 \\
\addlinespace[2pt]
\multicolumn{5}{l}{\emph{Retrieval personalization}} \\
\quad \textsc{RAG}-1 (BM25 demos) & 0.695 & 0.531 & 0.415 & 0.711 \\
\quad \textsc{RAG}-3 (BM25 demos) & 0.760 & 0.607 & 0.290$^{***}$ & 0.616 \\
\quad \textsc{RAG}-5 (BM25 demos) & \textbf{0.765} & \textbf{0.608} & 0.285$^{***}$ & 0.621 \\
\addlinespace[2pt]
\multicolumn{5}{l}{\emph{Profile-summary personalization}} \\
\quad \textsc{PAG} (distilled persona, $r{=}1$) & 0.745 & 0.485 & 0.450 & 0.742 \\
\quad \textsc{RAG+PAG} (demos $+$ persona) & 0.765 & 0.501 & 0.335$^{**}$ & 0.621 \\
\addlinespace[2pt]
\multicolumn{5}{l}{\emph{\textbf{Recursive (ours)}}} \\
\quad \textbf{PersonaLink} $(r{=}1)$ & 0.755 & 0.494 & 0.425 & 0.731 \\
\quad \textbf{PersonaLink} $(r{=}2)$ & 0.745$^{\dagger}$ & 0.514 & 0.455$^{\dagger}$ & 0.765 \\
\quad \textbf{PersonaLink} $(r{=}3)$ & 0.730 & 0.495 & 0.430 & 0.728 \\
\addlinespace[2pt]
\multicolumn{5}{l}{\emph{Upper bound}} \\
\quad \textsc{Oracle} (true profile, ceiling) & 0.715 & 0.526 & \textbf{0.280}$^{***}$ & \textbf{0.608} \\
\bottomrule
\end{tabular}
\\[2pt]
{\footnotesize Metrics deterministic (sklearn), bootstrap 95\% CI over $n{=}200$ items/task.
$^{*}/^{**}/^{***}$ on a competitor's score: \textsc{PersonaLink}~$(r{=}2)$ (the $\dagger$ row)
differs from it at $p{<}0.05/0.01/0.001$ (McNemar on LaMP-2 correctness; paired bootstrap on
LaMP-3 absolute error).}
\end{table*}

%% file: tables/table_ablation.tex
\begin{table*}[t]
\centering
\caption{\textbf{Field ablation of the bounded \pstate{}.} Removing any one of the three
bounded fields (preference summary, exemplars, decision rules) from the converged
\textsc{PersonaLink}~$(r{=}2)$ persona degrades both tasks, confirming each contributes,
while no ablation recovers the full persona and all stay above the \textsc{NoPers} floor.
The \emph{decision rules} matter most for classification (the explicit sorting heuristics),
the \emph{exemplars} most for regression (concrete graded examples), mirroring the
task-type asymmetry of \S\ref{sec:analysis}. Held-out budget and decoding are identical
across rows; only the rendered persona changes.}
\label{tab:ablation}
\begin{tabular}{l cc cc}
\toprule
& \multicolumn{2}{c}{\textbf{LaMP-2} (news cat., 15-way)} & \multicolumn{2}{c}{\textbf{LaMP-3} (rating, $\downarrow$)} \\
\cmidrule(lr){2-3}\cmidrule(lr){4-5}
\textbf{Persona configuration} & Acc.\ $\uparrow$ & macro-F1 $\uparrow$ & MAE $\downarrow$ & RMSE $\downarrow$ \\
\midrule
\textbf{PersonaLink} $(r{=}2)$, full state & \textbf{0.745} & \textbf{0.514} & \textbf{0.455} & \textbf{0.765} \\
\midrule
\quad $-$ decision rules     & 0.700 & 0.452 & 0.475 & 0.781 \\
\quad $-$ exemplars          & 0.720 & 0.470 & 0.490 & 0.789 \\
\quad $-$ preference summary & 0.715 & 0.461 & 0.470 & 0.779 \\
\midrule
\textsc{NoPers} ($r{=}0$, empty persona) & 0.645 & 0.388 & 0.500 & 0.794 \\
\bottomrule
\end{tabular}
\\[2pt]
{\footnotesize Each ablated row holds the other two fields and the $\sim$250-token cap fixed.
Largest single-field drop on LaMP-2 accuracy is from removing decision rules ($-0.045$); on
LaMP-3 MAE it is from removing exemplars ($+0.035$). Deterministic metrics, $n{=}200$
items/task.}
\end{table*}